\documentclass{article}
\usepackage{iclr2027_conference,times}

\usepackage{amsmath,amsfonts,bm}

\def\eqref#1{equation~\ref{#1}}

\def\1{\bm{1}}

\DeclareMathAlphabet{\mathsfit}{\encodingdefault}{\sfdefault}{m}{sl}
\SetMathAlphabet{\mathsfit}{bold}{\encodingdefault}{\sfdefault}{bx}{n}

\usepackage{amsmath}
\usepackage{amssymb}
\usepackage{array}
\usepackage{booktabs}
\usepackage{enumitem}
\usepackage{graphicx}
\usepackage{hyperref}
\usepackage{multirow}
\usepackage{colortbl}
\usepackage{wrapfig}
\usepackage{needspace}
\usepackage{capt-of}

\definecolor{rowband}{HTML}{EDF1F5}
\definecolor{colglobal}{HTML}{FAF1E2}
\definecolor{coloracle}{HTML}{E8E1F2}
\definecolor{colchain}{HTML}{F4F0FA}
\usepackage{tabularx}
\usepackage{xcolor}
\usepackage{url}

\definecolor{CoreBlue}{HTML}{1F5A94}
\definecolor{TodoOrange}{HTML}{B45309}
\definecolor{PlaceholderGray}{HTML}{F3F4F6}

\usepackage{amsthm}
\usepackage{tcolorbox}
\tcbuselibrary{skins,breakable}
\newtheorem{theorem}{Theorem}

\newtheorem{definition}{Definition}
\newtheorem{lemma}{Lemma}

\newtheorem{remark}{Remark}
\newtheorem*{restatethm}{Theorem}
\definecolor{gaingreen}{RGB}{25,107,36}
\definecolor{dropred}{RGB}{165,49,49}
\newcommand{\gaintag}[1]{{\color{gaingreen}\boldmath\bfseries(#1)}}
\definecolor{mProgressLM}{HTML}{427BA8}
\definecolor{mRoboDopamine}{HTML}{3A8360}
\definecolor{mRoboMeter}{HTML}{A5671C}
\definecolor{mTOPReward}{HTML}{AF5F5D}
\definecolor{mVLAC}{HTML}{2C808B}
\newcommand{\mname}[2]{\textcolor{#1}{\textbf{#2}}}
\usepackage{cancel}
\newcommand{\gone}[1]{{\color{dropred}\cancelto{\color{gaingreen}0}{#1}}}
\definecolor{NUPurple}{HTML}{4E2A84}
\definecolor{NUPurple60}{HTML}{836EAA}
\definecolor{NUPurple30}{HTML}{B6ACD1}
\definecolor{NUPurple10}{HTML}{E4E3EC}
\tcbset{nubox/.style={enhanced jigsaw, breakable,
  colback=NUPurple!4!white, colframe=NUPurple,
  arc=5pt, boxsep=5pt, left=10pt, right=10pt, top=2pt, bottom=2pt,
  boxrule=0.8pt, drop shadow=gray!50!white,
  before skip=8pt, after skip=8pt}}
\newtcolorbox{thmbox}{nubox, unbreakable, top=3pt, bottom=3pt}
\newtcolorbox{keybox}{nubox}

\makeatletter
\def\thm@space@setup{\thm@preskip=3pt \thm@postskip=3pt}
\makeatother

\newsavebox{\fittitlebox}
\newdimen\fittitlesize
\newdimen\fittitlemin
\newif\iffittitlecontinue
\newcommand{\fittitle}[1]{%
  \fittitlesize=17pt\relax
  \loop
    \setbox\fittitlebox=\vbox{\hsize=\textwidth
      \fontsize{\fittitlesize}{1.18\fittitlesize}\selectfont\scshape #1\par}%
    \ifdim\dimexpr\ht\fittitlebox+\dp\fittitlebox\relax>2.7\fittitlesize
      \ifdim\fittitlesize>\fittitlemin
        \advance\fittitlesize by -0.5pt
        \fittitlecontinuetrue
      \else
        \fittitlecontinuefalse
      \fi
    \else
      \fittitlecontinuefalse
    \fi
  \iffittitlecontinue
  \repeat
  \fontsize{\fittitlesize}{1.18\fittitlesize}\selectfont #1}

\title{\fittitle{\raisebox{-0.22\height}{\includegraphics[height=1.65em]{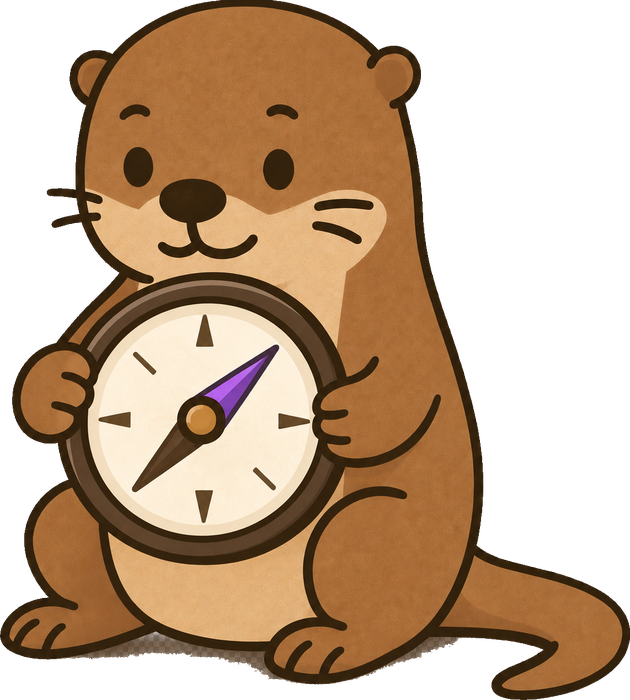}}\hspace{0.25em}\texttt{\textbf{ProgressCompass}}: Embodied Progress Reward Models Are Lost Without the Right Context}}

\author{%
Jianshu Zhang\textsuperscript{1}$^{*}$\quad
Keliang Wu\textsuperscript{1}$^{*}$\quad
Chengxuan Qian\textsuperscript{2}\quad
Xiyuan Yang\textsuperscript{3}\quad
Ce Zhang\textsuperscript{4} \\
\bfseries Ariel Tian\textsuperscript{1}\quad
Anbang Liu\textsuperscript{1}\quad
Haoran Lu\textsuperscript{1}\quad
Han Liu\textsuperscript{1} \\
\normalfont \textsuperscript{1}Northwestern\qquad
\textsuperscript{2}UCSB\qquad
\textsuperscript{3}UIUC\qquad
\textsuperscript{4}CMU\qquad
$^{*}$Equal contribution \\
\normalfont\fontfamily{pcr}\selectfont Project Page: \href{https://andyzworks.github.io/progresscompass/}{\textcolor{red}{https://andyzworks.github.io/progresscompass}}%
}

\iclrfinalcopy

\begin{document}

\maketitle

\begin{abstract}
Embodied agents now take on ever longer tasks. For long tasks, knowing only
whether a task finally succeeds or fails says little; the steps along the way
matter. Progress Reward Models (PRMs) score how far a task has come at every step, and
serve as dense rewards, verifiers and monitors. Yet in long tasks the current
frame alone often cannot tell how far the task has come, because progress
depends on what happened before. We call this problem
\emph{context-dependent progress estimation}. Existing benchmarks on progress estimation mostly focus on short tasks whose
progress can be read from the current observation, and whether PRMs can estimate
progress when context is needed remains underexplored. We therefore build
\textsc{ContextProgress-Bench}, with $24$ manipulation tasks for $120$ episodes. The benchmark covers three settings: (i) \emph{State Recall}, where information
needed for progress appeared earlier but is not in the current frame; (ii) \emph{Sequence Tracking}, where steps follow a fixed order, so progress
requires knowing which steps are done and which comes next;
and (iii) \emph{Recurrence Disambiguation}, where look-alike frames sit at very
different progress. We then run a paired diagnosis: each PRM
keeps the same input format in both runs, and in one run its instruction
integrates the right context. Results show that even for PRMs
that read the entire history would
still get lost in estimating progress. However, with the right context, the same
five models cut their progress error by \mbox{$77$--$82\%$}. This suggests that Embodied PRMs are thus not incapable in progress
estimation, but lost without the right context. We therefore propose
\texttt{ProgressCompass}, an autonomous agentic loop that reorients an existing
PRM and uses current general-purpose VLMs to supply the context the PRM needs.
Wrapped in the loop, the same frozen PRM cuts its progress error by $63\%$ and
raises its rank agreement by $76\%$. With such a compass, PRMs estimate progress
far better on longer, more complex tasks.
\end{abstract}

\vspace{2pt}
\begin{center}
  \includegraphics[width=0.86\linewidth]{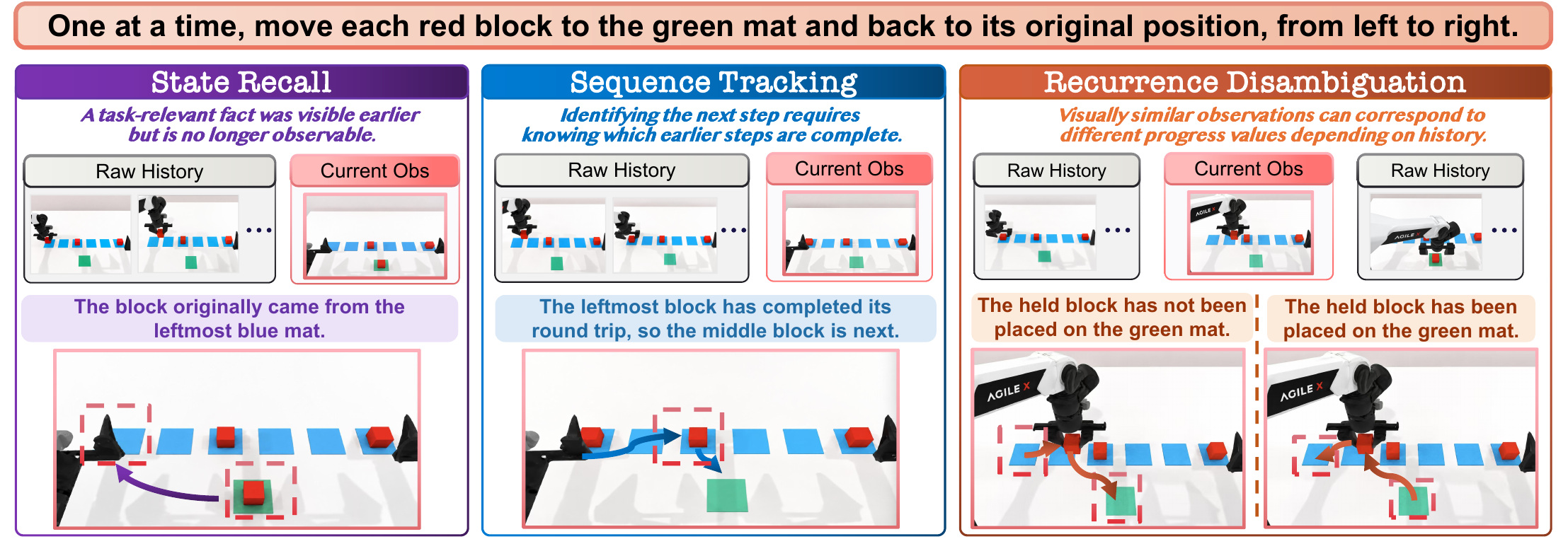}
  \vspace{-4pt}
  \captionof{figure}{\textbf{Why a Progress Reward Model (PRM) needs context.} Given
  the overall task instruction (top), the current frame alone may not suffice to
  estimate progress. (i)~\emph{State Recall}: information
  needed for progress, such as which mat the block came from, appeared earlier but
  is not in the current frame. (ii)~\emph{Sequence Tracking}: steps follow a fixed
  order, so progress needs knowing which steps are done. (iii)~\emph{Recurrence
  Disambiguation}: look-alike frames sit at different progress.}
  \label{fig:teaser}
\end{center}


\clearpage
\raggedbottom
\section{Introduction}

Embodied agents now take on ever longer tasks. For a long task, knowing only
whether the task finally succeeds or fails says little; the steps along the way
matter. A Progress Reward Model (PRM) scores how far a task has come at every
step. PRMs serve as dense rewards where the environment gives
none~\citep{ma2023vip,zhang2025rewind,fei2026srpo}, as verifiers that decide
whether a step is finished~\citep{du2023success}, and as monitors that notice
when an execution has stalled~\citep{park2026hideandseek}. Recent PRMs trained on large robot corpora score
arbitrary instructions and
trajectories~\citep{zhang2026progresslm,tan2026robodopamine,liang2026robometer,zhang2026vlac}.

These PRMs work well when the current frame shows the answer: a glass is half
poured when it looks half full. Many long tasks are not like this.
Figure~\ref{fig:teaser} shows one instruction: move each red block to the green
mat and back to its original position, one at a time, from left to right. Three
moments in this task cannot be scored from the current frame alone. When a block
is carried back, the frame shows the block but not which blue mat it came from,
so the PRM must recall a state that is no longer visible (\emph{State Recall}).
When the middle block is moving, the frame does not show whether the leftmost
block has already finished its round trip, so the PRM must track where the task
is in its fixed order (\emph{Sequence Tracking}). Just before and just after the
held block is placed, the two frames look almost the same but sit at different
progress, so the PRM must tell which moment it is in (\emph{Recurrence
Disambiguation}). In each case progress is well defined, but the current frame
does not determine it. We call this problem \emph{context-dependent progress
estimation}.

The obvious fix is to give the PRM the whole history. The history, however, is
only a record of what was observed. What the PRM needs is one specific fact that
the record establishes, such as which mat a block came from or how many round
trips are done, and which fact matters depends on the instruction and on the
current frame. We call this fact the \emph{context}.

Existing benchmarks on progress estimation, such as value-order evaluation over
robot datasets~\citep{ma2025gvl,budzianowski2025opengvl} and Progress-Bench~\citep{zhang2026progresslm},
mostly focus on short tasks whose progress can be read from the current
observation, so whether PRMs can estimate progress when context is needed remains
underexplored. We therefore build \textsc{ContextProgress-Bench}, a
high-quality benchmark of $24$ tasks, $120$ episodes and $552$ annotated subtask
intervals from RMBench~\citep{chen2026rmbench},
RoboDojo~\citep{chen2026robodojo} and LIBERO-Mem~\citep{chung2026memory}. We select every task by hand so
that it needs context, and annotate by hand the context each episode requires in
each time interval.

We then run a paired diagnosis on five PRMs. For each PRM, both runs use the same
input format, and in one run the instruction integrates the right context, so only
the context differs. Without the right context, every PRM is far off: MAE
lies between $15.6$ and $31.0$ on a $0$ to $100$ scale. Models that read or
retrieve from the entire history still get lost.
With the right context, the MAE of every PRM falls by $77\%$ to $82\%$, to
between $3.4$ and $6.7$, and the gain holds
on $93.3\%$ of paired intervals rather than on a few easy episodes.
Anchored on its own previous endpoint instead of the true one, every PRM still
improves but far less. The
errors also follow the three settings: a PRM forgets a state that has left the
frame, drifts ahead in an ordered task, or gives near-identical values to
repeated events. Embodied PRMs are thus not incapable, but lost without the right
context.

This finding tells us what to build, and it points to two kinds of models that
complement each other. A PRM scores a step well once it knows which step is
underway, but it cannot work out that step from the history. Vision-Language
Models (VLMs) are poorly calibrated estimators of progress, but they are good at
understanding a task, breaking it into steps and providing context. Each does
well what the other does poorly, and we design an agentic loop around this split.
We propose \texttt{ProgressCompass}, an autonomous agentic loop that keeps the
PRM frozen and uses current general-purpose VLMs to supply the context it needs.
An Orienter reads the frames and states the current step, the frozen PRM scores
that step, a Verifier checks whether the step is finished, and a text-only
Navigator runs the loop.

Wrapped in the loop, the same frozen PRM cuts its progress error by $63\%$ and
raises its rank agreement by $76\%$, closing $78\%$ of the gap to ground-truth
context. It stays robust when the video stops early (Early stop), does more than
asked (Extra steps), or follows an unrelated task (Mismatch). We further ablate
its components, and parallelize the loop so that no component within the loop sits idle, which cuts
wall-clock time by $65.6\%$ and keeps the cost of the added components low.

\newpage
Our contributions can be summarized as follows.
\begin{itemize}[leftmargin=1.4em,itemsep=1pt,topsep=2pt,parsep=0pt]
  \item We formulate \emph{context-dependent progress estimation} and build
  \textsc{ContextProgress-Bench}, which isolates its three settings.
  \item Through a paired diagnosis, we show that current PRMs are lost without the
  right context, even with the full history, and are reliable once the right
  context is supplied.
  \item We propose \texttt{ProgressCompass}, an autonomous agentic loop that
  supplies the right context to a frozen PRM with general-purpose VLMs, and
  estimates progress accurately and robustly.
\end{itemize}

\flushbottom
\section{Context-Dependent Progress Estimation}
\label{sec:context-dependent-progress}

\subsection{Task Formulation}
\label{sec:formulation}

Let $x$ be a task instruction. Let $O=(o_1,\ldots,o_T)$ be one execution of the task, where $T$ is the number of observations and $o_t$ is the observation at time $t$. The ground-truth progress at that time is $p_t\in[0,1]$, and the model predicts $\hat p_t$.

\textbf{We study progress annotation.} The model assigns a progress value to every observation of an execution. At time $t$, let $H_t=o_{1:t-1}$ denote the observation history before $o_t$, with $H_1=\varnothing$. What the context has to supply at $t$ comes from $x$, $H_t$, and $o_t$; later observations do not change it.

Of these inputs, the simplest Progress Reward Model (PRM) uses only the current observation, $\hat p_t=f(o_t\mid x)$. We study tasks where this cannot work, because progress depends on information that $o_t$ does not carry. We call such progress estimation \emph{\textbf{context-dependent}}.

\textbf{The right context makes progress estimable from $o_t$.}
If the current observation lacks a piece of information, the natural fix is to supply it. We call this missing piece the \emph{context} $c_t$. Without it, no estimator can recover $p_t$; with it, the current observation is enough again:
\begin{equation}
  \underbrace{f(o_t\mid x)}_{\text{without context}}\;\neq\;p_t,
  \qquad\qquad
  \underbrace{f(o_t\mid x,\,c_t)}_{\text{with context}}\;=\;p_t.
  \label{eq:with-context}
\end{equation}
The two sides differ only by $c_t$, and the gap between them is what a context-dependent task costs a model that ignores its history. Theorem~\ref{thm:main} in Section~\ref{sec:progress-compass} turns this gap into a floor on the error of every estimator that reads only $x$ and $o_t$, whatever its capacity.

\textbf{Having the history is not having the context.}
At time $t$, only the history $H_t$ can supply $c_t$, and the obvious choice is to hand the model the history itself, as if $c_t$ were $H_t$. But Equation~\ref{eq:with-context} asks for one specific piece of information, whereas $H_t$ is a raw stream of past observations that contains it somewhere. The history helps only after it has been turned into the context.

\begin{keybox}
\textbf{\textcolor{NUPurple}{Context is history interpreted for the task and the moment.}}
$H_t$ records what was observed. $c_t$ is what that record \emph{means} given the instruction $x$ and the observation $o_t$ in front of the model: which outcomes hold, how far the task has come, which occurrence is underway. Nothing in $H_t$ states these directly, and the same history yields a different $c_t$ at a different moment, so this is an interpretation rather than a retrieval:
\begin{equation}
  c_t=\phi(H_t\mid x,\,o_t),
  \label{eq:context-progress}
\end{equation}
where $\phi$ is conditioned on the instruction $x$ and the current observation $o_t$, because $x$ says what to look for and $o_t$ says what is already visible, and together they decide which reading of the history is the one the estimator needs. Context-dependent progress estimation thus comes down to $\phi$: Section~\ref{sec:taxonomy} names the three kinds of fact that $\phi$ must recover, Section~\ref{sec:diagnosis} shows that current PRMs, even when handed the entire history, do not carry out $\phi$ on their own, and Section~\ref{sec:progress-compass} assigns $\phi$ to general-purpose VLMs around a frozen $f$.
\end{keybox}

\subsection{Three Forms of Context Dependence}
\label{sec:taxonomy}

The mapping $\phi$ in Equation~\ref{eq:context-progress} turns the history into the context that instruction $x$ and the current observation $o_t$ need. Its three inputs play fixed roles: $x$ says what to look for, $o_t$ says what is already visible, and $H_t$ is where the rest has to be found. We distinguish three forms by what $o_t$ is missing; Figure~\ref{fig:teaser} shows all three arising from a single instruction. In each form, $\phi$ reads something different from the history, and the context takes a different shape. Each form also contributes its own term to the error floor of Theorem~\ref{thm:main}, so that the cost of each can be stated separately.

\textbf{State Recall: What happened before?}
Some task-relevant facts are visible only for a while. A state change happens, and later frames no longer show whether it happened or what the state was before. Here $x$ fixes which facts matter, $o_t$ tells which of them are currently hidden, and $\phi$ has to recall those from the history. The context is the set of task-relevant facts visible earlier but not in $o_t$.

\textbf{Sequence Tracking: Where are we in the sequence?}
Some tasks consist of $K$ steps that must be completed in order. The current frame shows a step being performed, but not whether the steps before it have actually been done. Here $x$ fixes the order, $H_t$ tells which steps have been completed, and $\phi$ has to judge whether the step shown in $o_t$ is the one that comes next. Because the steps are ordered, the completed ones form a prefix of the sequence, and the first step not yet completed is the one that should be in progress; we call its index the \emph{active position} $a_t$, which runs from $1$ (nothing done) to $K+1$ (all done). The context is this position together with the judgment that $o_t$ indeed shows step $a_t$, which holds exactly when every step before the one in $o_t$ is done.

\textbf{Recurrence Disambiguation: Which occurrence is this?}
Some tasks repeat the same event several times. Frames from different repetitions are visually indistinguishable, written $o_u\approx o_v$, but sit at different progress values ($p_u\neq p_v$); so can frames from different stages of one repetition, such as just before and just after the event takes place. Here the repeated event is tied to $x$, $o_t$ shows one such look-alike moment, and $\phi$ has to place it against what the history has recorded. The context is how many occurrences of the event have been completed so far, which gives the \emph{occurrence index} $r_t$ of the current one, together with whether that occurrence is already underway. The index separates look-alike frames from different repetitions; the second part separates look-alike frames within one.

\subsection{Controlled Benchmark Construction}
\label{sec:benchmark}

\textbf{Source trajectories.}
\textsc{ContextProgress-Bench} is built from robot-manipulation trajectories in RMBench~\citep{chen2026rmbench}, RoboDojo~\citep{chen2026robodojo}, and LIBERO-Mem~\citep{chung2026memory}, itself built on LIBERO~\citep{liu2023libero}. From these sources we select by hand the tasks whose progress is context-dependent in the sense of Section~\ref{sec:formulation}, that is, tasks in which the current observation alone does not determine progress and at least one form of Section~\ref{sec:taxonomy} is needed. For every episode we then annotate by hand the subtask intervals and, for each interval, the context that estimating its progress requires: the earlier outcome it depends on, its position in the ordered plan, or the occurrence of a repeated event it belongs to.

\begin{figure}[t]
  \centering
  \includegraphics[width=\linewidth]{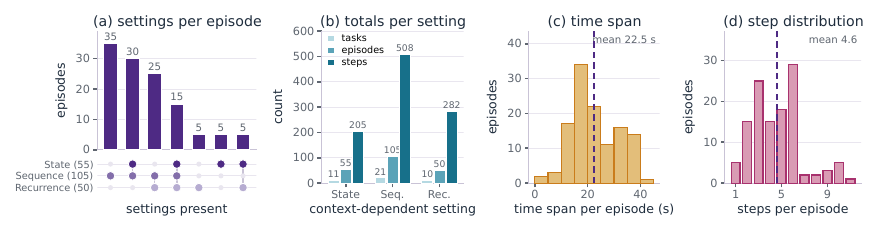}
  \caption{\textbf{Statistics of \textsc{ContextProgress-Bench}.} (a)~Episodes
  grouped by the context forms they require. (b)~Tasks, episodes and annotated
  steps per form. (c)~Episode duration. (d)~Annotated steps per episode; every
  episode needs context at least once.}
  \label{fig:benchmark-stats}
\end{figure}

\textbf{Benchmark scope.}
The benchmark contains 24 tasks, and for each task we sample five episodes at random. The resulting 120 videos contain 71,708 frames of execution. Figure~\ref{fig:benchmark-stats} summarizes its context forms, coverage, episode length, and decomposition depth.

\section{Where Progress Reward Models Get Lost}
\label{sec:diagnosis}
\label{sec:diagnostic-setup}

\textbf{Evaluation setting.}
We evaluate five Progress Reward Models (PRMs): ProgressLM-3B-RL~\citep{zhang2026progresslm},
Robo-Dopamine-GRM-3B~\citep{tan2026robodopamine},
RoboMeter-4B~\citep{liang2026robometer}, TOPReward with
Molmo2-4B~\citep{chen2026topreward,clark2026molmo2} and
VLAC-8B~\citep{zhang2026vlac}. For each model, both runs use the same input
format: \textbf{without context} gives the episode instruction, and \textbf{with
context} replaces it with the instruction of the active subtask, which integrates
the annotated context, on the interval that subtask occupies. This context is
given by hand: we write it from the annotation and set every subtask boundary
ourselves, so the model never decides where a subtask begins or ends. The two
runs are paired on all $552$ annotated subtask intervals, so they differ in $c_t$
alone. We report two metrics: (i) \textbf{MAE} is
the mean absolute error of the predicted progress, so it
asks how accurate each individual estimate is; (ii) \textbf{Spearman's $\rho$} is the
rank correlation between the predicted curve and time, so it asks whether the
estimates are ordered correctly, whatever their absolute level. A model can be
right about the ordering and wrong about the values, or the reverse, and without
context these models are often wrong about both.

\begin{table}[t]
  \caption{\textbf{The Context Gap.} \textbf{\emph{Left:}} the same frozen model with the
  same input format, first without context, then with the correct context for each
  annotated subtask. The two with-context settings differ in where that context
  puts the estimate: \emph{oracle} at the true position of its subtask,
  \emph{self-chained} at where the model ended the previous one.
  ({\color{gaingreen}\textbf{Green values}}) are the relative change
  from the same model w/o context. \textbf{\emph{Right:}} each dot is one
  subtask of one model, at its MAE without context ($x$) and with context ($y$),
  coloured as on the left; dots below the diagonal are where context helped.}
  \label{tab:diagnostic-results}
  \centering
  \begin{minipage}[t]{0.706\textwidth}
    \vspace{0pt}
    \centering
    \scriptsize
    \renewcommand{\arraystretch}{1.45}
    \setlength{\tabcolsep}{1.8pt}
    \begin{tabular}{l >{\columncolor{colglobal}}r >{\columncolor{colglobal}}r >{\columncolor{coloracle}}r >{\columncolor{coloracle}}r >{\columncolor{colchain}}r >{\columncolor{colchain}}r}
      \toprule
      \multirow{2}{*}{\textbf{Model}}
        & \multicolumn{2}{c}{\textbf{w/o context}}
        & \multicolumn{2}{c}{\textbf{w/ context: oracle}}
        & \multicolumn{2}{c}{\textbf{w/ context: self-chained}} \\
      \cmidrule(lr){2-3}\cmidrule(lr){4-5}\cmidrule(lr){6-7}
        & MAE\,$\downarrow$ & $\rho$\,$\uparrow$
        & MAE\,$\downarrow$ & $\rho$\,$\uparrow$
        & MAE\,$\downarrow$ & $\rho$\,$\uparrow$ \\
      \midrule
      \mname{mProgressLM}{ProgressLM} & 17.95 & 0.74 & 4.18\,\gaintag{$-77\%$} & 0.99\,\gaintag{$+35\%$} & 16.70\,\gaintag{$-7\%$} & 0.99\,\gaintag{$+34\%$} \\
      \mname{mRoboDopamine}{Robo-Dopamine} & 15.59 & 0.74 & 3.43\,\gaintag{$-78\%$} & 0.98\,\gaintag{$+31\%$} & 4.66\,\gaintag{$-70\%$} & 0.97\,\gaintag{$+31\%$} \\
      \mname{mRoboMeter}{RoboMeter} & 25.42 & 0.53 & 4.91\,\gaintag{$-81\%$} & 0.93\,\gaintag{$+75\%$} & 11.39\,\gaintag{$-55\%$} & 0.92\,\gaintag{$+74\%$} \\
      \mname{mTOPReward}{TOPReward} & 30.95 & 0.56 & 6.70\,\gaintag{$-78\%$} & 0.90\,\gaintag{$+62\%$} & 9.42\,\gaintag{$-70\%$} & 0.89\,\gaintag{$+59\%$} \\
      \mname{mVLAC}{VLAC} & 24.43 & 0.72 & 4.50\,\gaintag{$-82\%$} & 0.97\,\gaintag{$+35\%$} & 17.13\,\gaintag{$-30\%$} & 0.94\,\gaintag{$+30\%$} \\
      \bottomrule
    \end{tabular}
  \end{minipage}\hfill
  \begin{minipage}[t]{0.262\textwidth}
    \vspace{0pt}
    \centering
    \includegraphics[width=\linewidth]{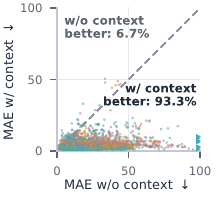}
  \end{minipage}
\end{table}
\begin{figure}[t]
  \centering
  \includegraphics[width=1.0\linewidth]{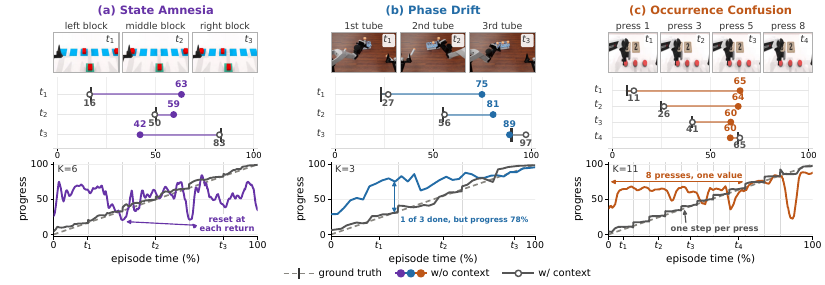}
  \caption{\textbf{Three typical errors, one episode each.} The task each panel
  runs on is (a)~move each red block onto the green mat and back to where it came
  from, from left to right; (b)~insert three tubes into a rack in a prescribed
  order; (c)~press the same button eight times.}
  \label{fig:cases}
\end{figure}

\subsection{Do Progress Reward Models Lose the Task Without Context?}
\label{sec:diagnostic-results}

They do, and by a wide margin. Without context, as
Table~\ref{tab:diagnostic-results} shows, every one of the five is far from the
true progress and orders the episode unreliably. Supplying the context
turns both around on every model at once, cutting the error by about four
fifths and bringing the ordering near perfect. The gain is not an average over
easy episodes: the right-hand panel of
Table~\ref{tab:diagnostic-results} pairs the two runs interval by interval, and
almost every pair lies below the diagonal: whether a PRM is right
turns on whether it is told where in the task it is, not on what it can see.
The oracle column is the error left once the step is known, the local term of
Theorem~\ref{thm:main}.
Anchored on its own endpoints instead of the oracle, every model still improves,
but far less, in the order of how firmly each closes a subtask.

\subsection{How the Three Forms Fail}
\label{sec:state-amnesia}

Figure~\ref{fig:cases} shows what being lost looks like in each form.
\textbf{(a)~State amnesia.} Once a state change has happened and left the frame,
everything after it is read against a scene that no longer records it. The blocks
are interchangeable and a block leaving the green mat gives no sign of where it
belongs, so from the first return onwards the curve falls back instead of
accumulating. \textbf{(b)~Phase drift.} The order of the steps is not visible in a
frame, so the model reads a step as a later one than it is, or credits an early
step as though a later one were already underway; the first insertion is complete
and the curve already sits where the second should be. \textbf{(c)~Occurrence
confusion.} Repeated actions produce near-identical frames and the model
returns near-identical values for all of them, so it cannot say which repetition
is underway: eight presses give one value, a plateau where the truth is a
staircase. In all three the model perceives the event and has nowhere to put it.

\section{\texttt{\textbf{ProgressCompass}}: Reorienting Progress Reward Models}
\label{sec:progress-compass}

\textbf{Who plays $f$.}
Section~\ref{sec:diagnosis} shows that $f$ is not where the deficit lies: given
the context $c_t$ for the moment it is looking at, a Progress Reward Model (PRM) estimates
progress reliably, which is the right-hand side of
Equation~\ref{eq:with-context}. A \textbf{PRM}
$\boldsymbol{\mathcal{P}}$ therefore plays $f$, and we keep it frozen. In our experiments $\mathcal{P}$ is
RoboMeter-4B, but any PRM that scores a clip against a step instruction can in
principle take its place, since the loop only reads the value $\mathcal{P}$
returns.

\textbf{Who plays $\phi$.}
What is missing is the map that produces that context $c_t$ of
Equation~\ref{eq:context-progress}. In Section~\ref{sec:diagnosis} we played
$\phi$ by hand: we wrote $c_t$ and set every subtask boundary. Without annotation, nobody
does either. Producing $c_t$ is task understanding rather than scoring, and PRMs
are not trained for it. Vision-Language Models are the reverse. They are not
calibrated progress estimators, but they are good at understanding a task,
breaking it into steps and providing context, so a VLM takes over the part we played by hand, and we call it
the \textbf{Orienter} $\boldsymbol{\mathcal{O}}$. A PRM scores a moment once
placed, a VLM says where the moment sits, and neither does the other's job. With
no boundaries given, the system must itself decide when each subtask ends and
orient again, so it must run as a loop.

\Needspace*{24\baselineskip}
\begin{wrapfigure}[24]{r}{0.39\linewidth}
  \vspace{-14pt}
  \setlength{\abovecaptionskip}{3pt}
  \centering
  \includegraphics[width=\linewidth]{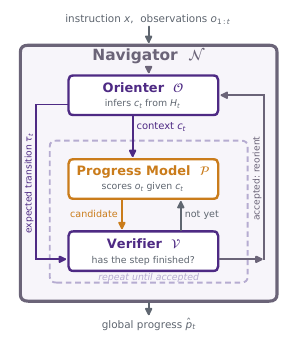}
  \caption{\textbf{\texttt{ProgressCompass}.} The Navigator $\mathcal{N}$ runs
  the loop. $\mathcal{O}$ gives the context $c_t$ to the frozen PRM
  $\mathcal{P}$ and the expected transition $\tau_t$ to $\mathcal{V}$;
  $\mathcal{P}$ proposes a completion and $\mathcal{V}$ checks it until the step
  is accepted, then $\mathcal{O}$ reorients.}
  \label{fig:progress-compass}
\end{wrapfigure}
\textbf{Run as a loop.}
$\mathcal{P}$ and $\mathcal{O}$ cover the two maps, but nothing yet makes them
run on their own. Producing $c_t$ again whenever the task moves on first requires
deciding that it has. That decision is a
yes-or-no question about the frames, which a VLM answers well even though it
cannot produce the value itself, so we hand it to a second VLM, the
\textbf{Verifier} $\boldsymbol{\mathcal{V}}$, kept apart from $\mathcal{O}$ because the
component that says what should happen should not also rule on whether it did.
Furthermore, we have $\mathcal{O}$ emit a second output beside the context, the
\textbf{expected transition} $\boldsymbol{\tau_t}$, which states what the frames should show
once the current step is done. It commits $\mathcal{O}$ to a checkable
prediction, so $\mathcal{V}$ rules on a stated outcome instead of on completion
in the abstract and the text side supplies what the pixels leave open;
Section~\ref{sec:ablation} measures what it is worth. Throughout,
$\mathcal{P}$ is given only the frames since the current step began, which is
exactly the input on which Section~\ref{sec:diagnostic-results} found it
reliable and, for a locally identifiable step, all it needs
(Definition~\ref{def:local}). Only a controller is then missing. We add a text-only one, the
\textbf{Navigator} $\boldsymbol{\mathcal{N}}$, which calls the other three and
carries the plan and what has been confirmed from one step into the next.
$\mathcal{N}$ closes a loop, orient, estimate, verify and orient again, which we
call \texttt{ProgressCompass}: the PRMs of Section~\ref{sec:diagnosis} are not
blind but lost.

\textbf{Reduced error floor by \texttt{ProgressCompass}.}
This division of labour can be stated as a bound. With the $K$ steps of the plan
and the active position $a_t$ of Section~\ref{sec:taxonomy}, the true progress is
$p_t=\big((a_t-1)+p^{\mathrm{loc}}_t\big)/K$, where
$p^{\mathrm{loc}}_t\in[0,1]$ is the progress within step $a_t$, and
\texttt{ProgressCompass} outputs
$\hat p_t=\big((k_t-1)+\hat p^{\mathrm{loc}}_t\big)/K$, where $k_t$ is the step
that $\mathcal{N}$ holds and $\hat p^{\mathrm{loc}}_t$ the estimate of
$\mathcal{P}$. The error $\varepsilon=\mathbb{E}\,|\hat p_t-p_t|$ averages over
the frames of a task, as MAE does; $\varepsilon^{f}$ is that of any current-frame
estimator $f(o_t\mid x)$ of Equation~\ref{eq:with-context}, and
$\varepsilon^{\mathrm{PC}}$ that of \texttt{ProgressCompass}. The \emph{context
floor} $\delta_{\mathrm X}$ of form $\mathrm X$ is the error that no estimate
from $x$ and $o_t$ alone can avoid on that form: for two frames that show the
same scene at different progress, half their progress gap, weighted by how often
such frames occur. The \emph{position error} $\eta^{\mathrm{PC}}$ counts how many
steps $k_t$ is off from $a_t$, at least one for a wrong position and the largest
error for a wrong plan. The \emph{local error} $\lambda$ is the error of
$\mathcal{P}$ within a step, on the frames where $k_t=a_t$.

\begin{figure}[t]
  \centering
  \includegraphics[width=\textwidth]{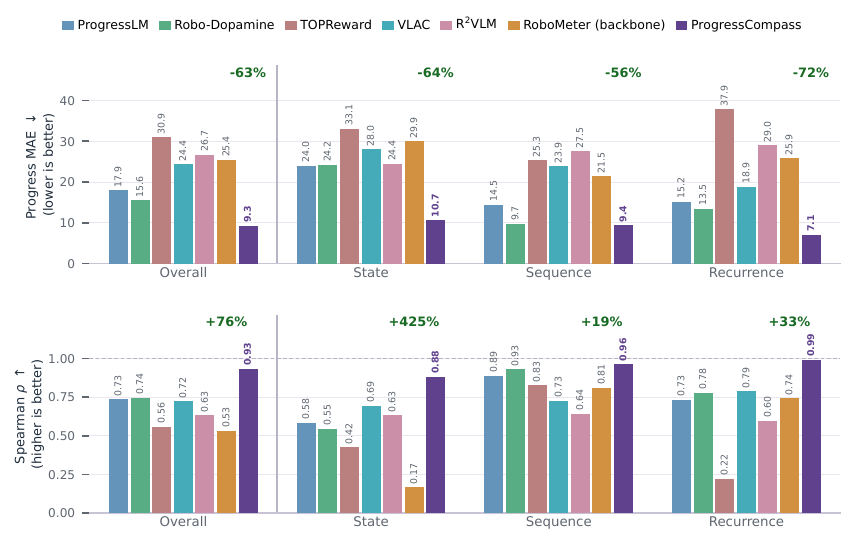}
  \caption{\textbf{Main results.} Progress MAE (top, lower is better) and
  Spearman $\rho$ (bottom, higher is better) per context form;
  {\color{gaingreen}\textbf{green}} is the change over the frozen RoboMeter.}
  \label{fig:main-bars}
\end{figure}

\begin{thmbox}
\begin{theorem}[Context floor and error of \texttt{ProgressCompass}]
\label{thm:main}
For every $K$-step task,
\begin{align*}
  \text{(no context, any $f$)}\quad
  \varepsilon^{f} \;&\ge\;
  \underbrace{{\color{dropred}\delta_{\mathrm{S}}}}_{\text{State}}
  \;+\;\underbrace{{\color{dropred}\delta_{\mathrm{Q}}}}_{\text{Sequence}}
  \;+\;\underbrace{{\color{dropred}\delta_{\mathrm{R}}}}_{\text{Recurrence}},\\[6pt]
  \text{(\texttt{ProgressCompass})}\quad
  \varepsilon^{\mathrm{PC}} \;&\le\;
  \underbrace{\gone{\delta_{\mathrm{S}}}}_{\text{outcomes}}
  \;+\;\underbrace{\gone{\delta_{\mathrm{Q}}}}_{\text{step}}
  \;+\;\underbrace{\gone{\delta_{\mathrm{R}}}}_{\text{occurrence}}
  \;+\;\underbrace{\frac{\eta^{\mathrm{PC}}}{K}}_{\text{position ($\mathcal{V}$)}}
  \;+\;\underbrace{\frac{\lambda}{K}}_{\text{local ($\mathcal{P}$)}}.
\end{align*}
The first bound holds for every such $f$. In the second bound,
a step is passed before it is complete only if $\mathcal{P}$ proposes its
completion early and $\mathcal{V}$ accepts, so $\mathcal{V}$ lets an early
proposal through with probability at most $\beta$, its false-accept rate. We put
the proof in Appendix~\ref{app:proofs}.
\end{theorem}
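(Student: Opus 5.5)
The lower bound is a pairing (two-point) argument applied separately to each form, and the upper bound is a triangle inequality on the decomposition $p_t=((a_t-1)+p^{\mathrm{loc}}_t)/K$.

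\textbf{Lower bound.} For any estimator $f(o_t\mid x)$, take two frames $o_u\approx o_v$ that show the same scene at different progress $p_u\neq p_v$. The estimator must return the same value $y$ on both. Hence
\[
|y-p_u|+|y-p_v|\ \ge\ |p_u-p_v|,
\]
so on average each of the two frames incurs at least half the gap. Weighting by how often such pairs occur gives exactly the definition of $\delta_{\mathrm X}$.

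To obtain the sum $\delta_{\mathrm S}+\delta_{\mathrm Q}+\delta_{\mathrm R}$ rather than a maximum, I will partition the ambiguous frames by form. The three forms are distinguished by what $o_t$ is missing: a hidden earlier outcome, the active position $a_t$, or the occurrence index $r_t$. I will assign each confusable pair to exactly one form, for instance by the first coordinate of the context on which the two frames differ. Each $\delta_{\mathrm X}$ is then computed on a disjoint set of frames, and since $\varepsilon^f$ averages over frames, the floors add.

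\textbf{Upper bound.} Subtract the two decompositions:
\[
|\hat p_t-p_t|\ \le\ \frac{|k_t-a_t|}{K}+\frac{|\hat p^{\mathrm{loc}}_t-p^{\mathrm{loc}}_t|}{K}.
\]
Split the frames into two cases.
\begin{itemize}
\item On frames with $k_t=a_t$, the first term vanishes and the second term averages to $\lambda/K$.
\item On frames with $k_t\neq a_t$, the second term is at most $1/K$, since both local values lie in $[0,1]$. It can therefore be absorbed by counting a wrong position as at least one step of error, which is how $\eta^{\mathrm{PC}}$ is defined. A wrong plan is charged the largest error, again matching the definition.
\end{itemize}
Averaging over frames gives $\varepsilon^{\mathrm{PC}}\le \eta^{\mathrm{PC}}/K+\lambda/K$.

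The $\delta$ terms do not appear in this bound because they are cancelled structurally. $\mathcal{P}$ receives the step instruction $c_t$ and only the frames since the step began. By the with-context identity in Equation~\ref{eq:with-context} and local identifiability (Definition~\ref{def:local}), look-alike frames in different steps or occurrences are mapped to different $k_t$. Their ambiguity is therefore carried entirely by the position term, not by a floor.

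\textbf{Role of $\beta$.} The final step links $\eta^{\mathrm{PC}}$ to the Verifier. The position can run ahead only if $\mathcal{P}$ proposes an early completion and $\mathcal{V}$ accepts it, which happens with probability at most $\beta$ per proposal. A union bound over the $K$ step transitions then bounds the expected number of premature advances. Late acceptance only delays $k_t$ and contributes a bounded lag. This step is where care is needed, because the statement is vague about it.

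\textbf{Main obstacle.} I expect the hardest part to be making the additivity in the lower bound rigorous. A single frame could be ambiguous in several forms at once, which would double-count error. I would handle this by defining the $\delta_{\mathrm X}$ on the disjoint partition above, or failing that, state the bound with $\max$ and note that it becomes a sum under disjoint supports.
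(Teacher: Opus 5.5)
\textbf{The gap is in the additivity step of your lower bound.} Partitioning confusable \emph{pairs} by form does not make the corresponding sets of \emph{frames} disjoint. In the eight-press recurrence task, one scene occurs at eight different progress values, so a single frame sits in many confusable pairs. Summing the two-point inequality $|y-p_u|+|y-p_v|\ge|p_u-p_v|$ over overlapping pairs then counts that frame's error several times, both within one form and across forms. So the claim that ``each $\delta_{\mathrm X}$ is then computed on a disjoint set of frames'' does not follow. The two-point bound only adds up over a matching, and even then the matching has to be chosen so that its total equals the floor.

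The paper avoids pairs altogether. It sets $m(o)=\min_{v}\mathbb{E}[\,|v-p_t|\mid o_t=o\,]$, labels each context-dependent \emph{frame} with exactly one form, and defines $\delta_{\mathrm X}=\mathbb{E}[m(o_t)\mathbf 1[t\text{ labelled }\mathrm X]]$. Given $o_t=o$, the output of $f$ is independent of $p_t$, so the conditional error is at least $m(o)$. This also covers randomized $f$, which your ``same value $y$'' step silently excludes. Hence $\varepsilon^f\ge\mathbb{E}[m(o_t)]\ge\sum_{\mathrm X}\delta_{\mathrm X}$, because $m\ge 0$ and the labels are disjoint.

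If you want to keep a pairing proof, pair the $q$-quantile with the $(1-q)$-quantile of $p_t$ given $o_t=o$ for $q<1/2$. This matching is disjoint and gives exactly $\mathbb{E}|p_t-\mathrm{med}|=m(o)$. Then do the additivity with frame labels, not pair labels. Falling back to a $\max$, as you suggest, would prove a strictly weaker statement than the theorem.

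\textbf{Your upper bound is the paper's argument.} It uses the same identity $\hat p_t-p_t=\big((k_t-a_t)+(\hat p^{\mathrm{loc}}_t-p^{\mathrm{loc}}_t)\big)/K$ on matched plans and the same off-position bound $d_t/K$ with $d_t=(|k_t-a_t|+1)\mathbf 1[k_t\ne a_t]$. It sets $d_t=K$ for a mismatched plan and then takes expectations. Two of your side remarks should go:
\begin{enumerate}
\item The $\delta$ terms are absent simply because the triangle inequality never produces them. A wrong $c_t$ at the correct position is charged to $\lambda$ by definition. So do not invoke Definition~\ref{def:local}; the paper states explicitly that the theorem does not assume it.
\item The $\beta$ clause needs only the structural fact that $\mathcal{N}$ advances and records an outcome only after $\mathcal{V}$ accepts. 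By the chain rule, a premature pass then has probability $\alpha_k\beta_k\le\alpha\beta$, with no independence needed. Your attempt to turn $\beta$ into a bound on $\eta^{\mathrm{PC}}$ (a union bound plus a ``bounded lag'') is not required. It is also not justified: neither the duration of a misalignment nor the accumulation of rejected true completions is controlled by $\beta$. The theorem treats $\eta^{\mathrm{PC}}$ as a defined quantity, not one it bounds.
\end{enumerate}
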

\end{thmbox}

\textbf{Context is irreplaceable.}
The first row is the cost of missing context. Frames that show the same scene
receive the same estimate, so each form of Section~\ref{sec:taxonomy} adds a
floor that no estimator reading only $x$ and $o_t$ can remove, whatever its
capacity. The row covers only such current-frame estimators; that the PRMs of
Section~\ref{sec:diagnosis}, which read a sampled history, are lost as well is an
empirical finding (Figure~\ref{fig:cases}). In the second row, $c_t$
states which outcomes hold, which step is underway and which occurrence it is,
so none of the three floors remains, and errors in $c_t$ are charged to the two
terms that follow. In our runs, $c_t$ is the subtask instruction from
$\mathcal{O}$, $\mathcal{P}$ is RoboMeter-4B on the frames of the current step,
and $k_t$ is the step $\mathcal{N}$ holds. $\mathcal{P}$ pays the local term, and
$\mathcal{V}$ keeps the position term small: a premature advance
needs a false accept, and because $\mathcal{N}$ records an outcome only on
acceptance, it does not corrupt later steps; a missed completion only delays the
position (Lemmas~\ref{lem:accumulation} and~\ref{lem:verification}).
\texttt{ProgressCompass} is therefore more accurate than every current-frame
estimator whenever
$(\eta^{\mathrm{PC}}+\lambda)/K<\delta_{\mathrm S}+\delta_{\mathrm Q}+\delta_{\mathrm R}$,
a condition rather than a guarantee
(Appendix~\ref{app:assumptions}).

\begin{figure}[t]
  \centering
  \includegraphics[width=\linewidth]{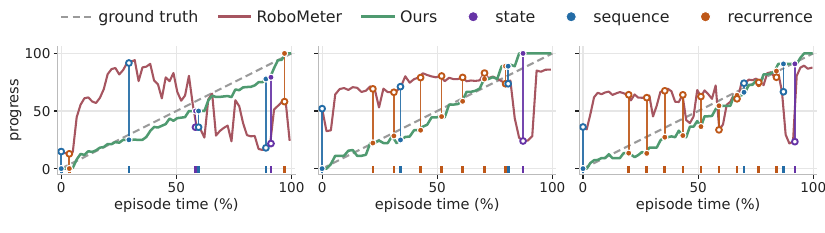}
  \vspace{-14pt}
  \caption{\textbf{Progress curves on three episodes.} Ticks mark moments that
  need context; vertical lines join ours (filled) and RoboMeter (hollow).}
  \label{fig:curves}

\end{figure}

\section{Results}
\label{sec:experiments}

\textbf{Experimental setup.}
We implement \texttt{ProgressCompass} with
Qwen3.5-27B~\citep{qwen2026qwen35} as the Orienter, Qwen3.5-9B as the Verifier and, without frames, as the Navigator, and
RoboMeter-4B, lightweight and not reliant on careful prompting, as the frozen Progress Reward Model (PRM), decoding at temperature $0$. We
compare against the five PRMs of Section~\ref{sec:diagnosis}, each
reading the full episode and instruction, and against
R$^2$VLM~\citep{zhang2026r2vlm}, which conditions on retrieved context and is the
closest existing method. RoboMeter-4B is both a baseline and our backbone, so
every gain uses the same frozen weights.

\subsection{Overall Performance}
\label{sec:main-results}

As shown in Figure~\ref{fig:main-bars}, wrapping the frozen RoboMeter-4B in
\texttt{ProgressCompass} more than halves its progress error, from $25.4$ to
$9.3$, and lifts its rank agreement from $0.53$ to $0.93$. No weight changes and the inputs are the same, so the whole gain
comes from the context. It is also the best method on every
context form, ahead of every frozen model and of R$^2$VLM, and recovers $78\%$
of the deficit that oracle context would close. Transition timing improves with it: boundary
error falls by two thirds, and the share of annotated boundaries matched within
$5\%$ of episode duration roughly triples. Figure~\ref{fig:curves} shows the same gap on single episodes.






\subsection{When Instruction and Execution Do Not Match}
\label{sec:context-interventions}

\Needspace*{16\baselineskip}
\begin{wrapfigure}[15]{r}{0.38\linewidth}
  \vspace{-12pt}
  \setlength{\abovecaptionskip}{2pt}
  \centering
  \includegraphics[width=\linewidth]{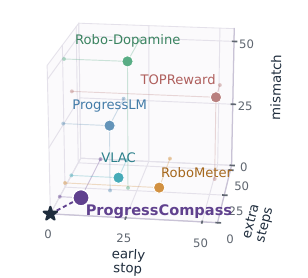}
  \caption{\textbf{Negatives.} Mean deviation of each method from the correct
  progress on Early stop, Extra steps and Mismatch; lower is better.}
  \label{fig:negatives}
\end{wrapfigure}
We build $15$ held-out negatives by breaking the correspondence between an
instruction and its execution in three directions: the execution does less than
asked, more than asked, or something unrelated. Each has its own correct
behaviour, and we score the whole curve against it (Figure~\ref{fig:negatives}).

\textbf{Early stop} keeps the instruction and cuts the video short, so the
correct value is the progress performed. Every action shown is correct, so a
model that reads the scene sees a finished task: TOPReward
deviates by $50.3$ and RoboMeter by $32.7$, against $7.3$ for
\texttt{ProgressCompass}.

\textbf{Extra steps} keeps the video and deletes steps from the instruction, so the video does more than was asked and the correct terminal value
is $100$. \texttt{ProgressCompass} deviates by $15.8$ against $24.0$ to $40.2$,
and alone reaches $100$.

\textbf{Mismatch} substitutes a structurally valid instruction from
an unrelated domain, so the correct curve is zero everywhere.
\texttt{ProgressCompass} never exceeds zero. VLAC also scores $0.0$,
but not by rejecting: with the correct instruction it ends at $-63.8$. ProgressLM, Robo-Dopamine and TOPReward deviate by $25.4$,
$49.1$ and $32.5$.

\Needspace*{6\baselineskip}
\subsection{What the Expected Transition Buys}
\label{sec:ablation}

\Needspace*{14\baselineskip}
\begin{wrapfigure}[13]{r}{0.40\linewidth}
  \vspace{-12pt}
  \setlength{\abovecaptionskip}{2pt}
  \centering
  \includegraphics[width=\linewidth]{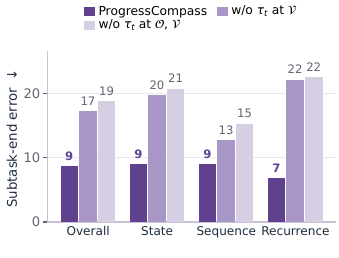}
  \caption{\textbf{Ablating $\tau_t$.} Error at the end of each subtask;
  lower is better.}
  \label{fig:ablation}
\end{wrapfigure}
We remove the expected transition two ways: \emph{w/o $\tau_t$ at $\mathcal{V}$}
keeps it everywhere else ($\mathcal{O}$ emits it and $\mathcal{P}$ receives it)
but $\mathcal{V}$ sees only the frames and the subtask sentence, and
\emph{w/o $\tau_t$ at $\mathcal{O}$ and $\mathcal{V}$} drops it from
$\mathcal{O}$'s output altogether. We score the error at the end of each
annotated subtask $k$, where a correct curve reads $k/K\cdot 100$ and
$\mathcal{V}$ uses $\tau_t$ to accept a completion. Removing $\tau_t$ at
$\mathcal{V}$ doubles this error from $8.6$ to $17.2$, and removing it at
$\mathcal{O}$ as well raises it to $18.8$ (Figure~\ref{fig:ablation}). The order
holds on every form: the error more than doubles on State, triples on
Recurrence, and rises least on Sequence, whose steps end in a visible action.
Progress MAE rises from $9.3$ to $17.5$ and $18.3$. Verification changes, not
planning: $\mathcal{O}$ proposes the same number of steps in
all three runs, while episodes that stall with a subtask unverified rise from $7$
to $40$ and $45$. Removing $\tau_t$ thus keeps the plan and enlarges the position
term of Theorem~\ref{thm:main}.

\subsection{Parallelizing the Loop Across Episodes}
\label{sec:efficiency-qualitative}

The loop is sequential within an episode, since $\mathcal{O}$ proposes the next
step only after $\mathcal{V}$ has checked the current one, but episodes share
nothing. We therefore run up to five episodes at once, and a dependency-aware scheduler sends each episode's next
ready call to the model it needs, so that a model idle for one episode serves
another. The order within every episode is
unchanged: time per episode falls from $114.7$ to $39.5$ s, with MAE and $\rho$ unchanged within paired bootstrap intervals.

\section{Related Work}
\label{sec:related-work}

\textbf{Embodied progress and process reward models.}
Current progress models are built in three main ways~\citep{zhang2026prmsurvey}:
frozen foundation models score progress
zero-shot~\citep{du2023success,sontakke2023roboclip,ma2025gvl,budzianowski2025opengvl,chen2026topreward};
temporal and relative supervision derives it from frame order or comparisons
within
demonstrations~\citep{dwibedi2019temporal,ma2023vip,ma2023liv,donahue2024progress,huang2024diffusionreward};
and models trained on progress targets add task
stages~\citep{hung2025victor,chen2026sarm,zhang2025rewind} and now score
arbitrary
trajectories~\citep{zhang2026progresslm,tan2026robodopamine,liang2026robometer,zhang2026vlac,zhang2026r2vlm}.
Their evaluations mostly use tasks whose progress can be read from the current
observation, so context-dependent progress estimation remains underexplored.

\textbf{Agentic context management.}
A long-horizon agent cannot keep all it has seen in its prompt, so it manages
what it carries forward: language agents curate their working
context~\citep{yao2023react,zhang2026memact,yi2026contextmgmt}, web agents
reflect, roll back and aggregate evidence~\citep{hu2025webcot,wang2026webaggregator},
and long-video models keep an explicit
memory~\citep{song2024moviechat,he2024malmm,zhang2026streamscout}. Embodied agents
ground instructions into plans, revise them from
feedback~\citep{ichter2023saycan,huang2023innermonologue,singh2023progprompt,liang2023codeaspolicies,huang2023voxposer},
and remember the scene~\citep{pashevich2021episodic,liu2025dynamem}. Embodied
progress estimation needs the same: over a long task, a PRM that reads the raw
history gets lost. We bring context management to
a frozen PRM, which needs one specific fact at each step.

\section{Conclusion}
\label{sec:conclusion}

We formulate \emph{context-dependent progress estimation} and build \textsc{ContextProgress-Bench}
to isolate its three settings. A paired diagnosis shows that current Progress
Reward Models (PRMs) are not blind but lost: without the right context they stay
far from the true progress, even when they read the whole history, and each cuts
its error by $77\%$ to $82\%$ once the context is given. \texttt{ProgressCompass}
supplies that context to a frozen PRM with general-purpose VLMs, without
annotation or training. It more than halves the error of its backbone and stays
robust when the execution does less than asked, more than asked, or something
unrelated.

\clearpage

\bibliography{iclr2027_conference}
\bibliographystyle{iclr2027_conference}

\appendix
\newpage
\section{Theoretical Analysis}
\label{app:theory}

\subsection{Setup}
\label{app:setup}
\label{app:reorientation-bound}

\textbf{Step form.}
Let $x$ have $K$ ordered steps $e_1,\ldots,e_K$. At time $t$, the active position
$a_t\in\{1,\ldots,K+1\}$ is one plus the number of completed steps, $b_k$ is the
time at which $e_k$ becomes active, and $p^{\mathrm{loc}}_t\in[0,1]$ is the
progress within the active step ($0$ once all steps are done). We analyse
\begin{equation}
  p_t=\frac{(a_t-1)+p^{\mathrm{loc}}_t}{K},
  \qquad
  \hat p_t=\frac{(k_t-1)+\hat p^{\mathrm{loc}}_t}{K},
  \label{eq:gt-progress}
\end{equation}
where $k_t$ is the step $\mathcal{N}$ holds and $\hat p^{\mathrm{loc}}_t$ the
estimate of $\mathcal{P}$; the second expression is how $\mathcal{N}$ reports
progress, on a $0$ to $100$ scale in the experiments.
The step form is an analysis device. If an annotation defines progress
differently, both bounds of Theorem~\ref{thm:main} move by at most the mean
absolute difference between the two definitions. Two frames that show the same
scene and differ only in details that carry no progress count as the same
observation ($o_u\approx o_v$). An error is
$\varepsilon=\mathbb{E}\,|\hat p_t-p_t|$ over the executions and frames of a
task, as MAE estimates it.

\begin{definition}[Locally identifiable step]
\label{def:local}
Each step $e_k$ has a self-contained description $q_k$ (object, source, target,
completion condition) such that for $t\in[b_k,b_{k+1})$ the pair
$(q_k,\,o_{b_k:t})$ determines $p^{\mathrm{loc}}_t$.
\end{definition}
This is why $\mathcal{P}$ receives only the frames since the current step began;
Theorem~\ref{thm:main} does not assume it. Under Definition~\ref{def:local}, the
earlier history affects $p_t$ only through
$c^\star_t=(a_t,q_{a_t},b_{a_t})$, a sufficient context of constant size.

\subsection{Conventions of the Second Bound}
\label{app:assumptions}

No component is assumed accurate; every term is defined, not assumed small. The
bound compares $k_t$ with $a_t$, which requires the plan of $\mathcal{O}$ to
match the annotated steps; every frame of an episode whose plan does not match is
charged the largest position error $K$. If $k_t=a_t$ but $c_t$ is wrong, the
error falls in $\lambda$. For a step $e_k$, $\alpha_k$ is the probability that
$\mathcal{P}$ proposes its completion early while $\mathcal{N}$ holds $e_k$, and
$\beta_k$ the probability that $\mathcal{V}$ accepts such a proposal given one is
made; $\alpha=\max_k\alpha_k$, $\beta=\max_k\beta_k$, and $\gamma$ is the rate at
which $\mathcal{V}$ rejects a true completion. By the chain rule the rate of a
premature advance is $\alpha_k\beta_k$, with no independence assumed.

\subsection{Context Floors and Lemmas}
\label{app:lemmas}

\begin{definition}[Context floors]
\label{def:gaps}
For a scene $o$, let $m(o)=\min_{v\in[0,1]}\mathbb{E}[\,|v-p_t|\mid o_t=o\,]$.
Label each context-dependent frame with one form, and let
$\delta_{\mathrm X}=\mathbb{E}[m(o_t)\,\mathbf 1[t\text{ is labelled }\mathrm X]]$
for $\mathrm X\in\{\mathrm S,\mathrm Q,\mathrm R\}$.
\end{definition}
If the frames showing $o$ split into two equally frequent groups whose progress
differs by $\delta$, then $m(o)=\delta/2$; $m(o)=0$ whenever $o$ determines
progress. Unlabelled frames are left out, so the floors are conservative.

\begin{lemma}[Context floor]
\label{lem:context-gap}
(a) Any estimator with the same output on two frames of the same scene whose
progress differs by $\delta$ has mean error at least $\delta/2$ on them.
(b) Every estimator whose output depends only on $x$, $o_t$, and randomness
independent of the execution has
$\varepsilon\ge\delta_{\mathrm S}+\delta_{\mathrm Q}+\delta_{\mathrm R}$.
\end{lemma}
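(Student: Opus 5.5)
Part (a) follows from the triangle inequality. Let the common output be $v$, and let the two frames have progress $p$ and $p+\delta$. Then $|v-p|+|v-(p+\delta)|\ge\delta$, so the average of the two errors is at least $\delta/2$. If the estimator is randomized, with randomness independent of the execution, the same inequality holds for every realization of the randomness, and taking expectations keeps the bound. This is exactly the statement $m(o)=\delta/2$ for the two-group example that follows Definition~\ref{def:gaps}.

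For part (b), I would condition on the current scene. Let $\hat p_t=g(x,o_t,\xi)$, where $\xi$ is independent of the execution. Fix the instruction $x$, so that $\varepsilon=\mathbb{E}\,|\hat p_t-p_t|$ is taken over executions and frames. I would first argue conditionally on $o_t=o$. Given $o$, the random variable $\xi$ is still independent of $p_t$: it is independent of the whole execution, including the pair $(o_t,p_t)$. So for each fixed value of $\xi$, the output $v=g(x,o,\xi)$ is a constant in $[0,1]$. Clipping $v$ to $[0,1]$ can only reduce the error, since $p_t\in[0,1]$. Hence $\mathbb{E}[\,|v-p_t|\mid o_t=o\,]\ge m(o)$ by the definition of $m$ as a minimum over constants. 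Averaging over $\xi$ gives $\mathbb{E}[\,|\hat p_t-p_t|\mid o_t=o\,]\ge m(o)$, and averaging over $o$ gives $\varepsilon\ge\mathbb{E}[m(o_t)]$.

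The remaining step is to split $\mathbb{E}[m(o_t)]$ by labels. Every context-dependent frame carries exactly one label in $\{\mathrm S,\mathrm Q,\mathrm R\}$, and unlabelled frames contribute $m(o_t)\ge0$. So
\[
\mathbb{E}[m(o_t)]\ \ge\ \sum_{\mathrm X}\mathbb{E}[m(o_t)\mathbf 1[t\text{ labelled }\mathrm X]]=\delta_{\mathrm S}+\delta_{\mathrm Q}+\delta_{\mathrm R}.
\]
Because the labels are disjoint, this is an identity on the labelled part, and dropping the unlabelled frames only weakens the bound. That matches the remark that the floors are conservative.

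The main obstacle is the conditioning step in (b). There are two points to get right. First, the joint distribution over "executions and frames" needs to be made explicit; I would treat $t$ as drawn uniformly, or with MAE weights, together with the execution. Second, conditioning must be on the scene $o_t$ itself and not on the history, so the estimator's output must truly be measurable with respect to $(x,o_t,\xi)$ only. This measurability is exactly where history-reading estimators escape the bound. The identification $o_u\approx o_v$ requires the same care: if two frames are treated as one scene, the estimator must give them the same output, so I would define $m$ on equivalence classes of $\approx$ and require $g$ to be constant on each class.
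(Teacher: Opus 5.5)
Your proposal is correct and matches the paper's proof essentially step for step. In (a) you use the same triangle-inequality bound $|v-p^{(i)}|+|v-p^{(j)}|\ge\delta$; in (b) you condition on $o_t=o$, note that the output is then independent of $p_t$ so its conditional error is at least $m(o)$, and average using $m\ge 0$ and the disjointness of the labels. Your points about clipping to $[0,1]$, independence of $\xi$ from $(o_t,p_t)$, and measurability of the output in $(x,o_t,\xi)$ alone only spell out details the paper leaves implicit.
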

\begin{proof}
(a) For a common output $v$, $|v-p^{(i)}|+|v-p^{(j)}|\ge\delta$.
(b) Given $o_t=o$, the output is independent of $p_t$, so its conditional error is
at least $m(o)$; averaging over $o_t$ and using $m\ge0$ gives the bound.
\end{proof}

\begin{lemma}[History gap]
\label{lem:history}
Let an estimator receive $B$ frames sampled uniformly and independently from
$o_{1:t}$, and let two executions differ only in a window of $L$ frames. With
probability $(1-L/t)^B\ge1-BL/t$ no sampled frame falls in the window, and
Lemma~\ref{lem:context-gap}(a) applies.
\end{lemma}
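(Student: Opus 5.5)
The statement is Lemma~\ref{lem:history}, the History gap. The plan is to prove it by a direct probability computation, then hand the resulting event to Lemma~\ref{lem:context-gap}(a).

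\emph{Probability of missing the window.} Fix $t$ and the window $W\subseteq\{1,\ldots,t\}$ of $L$ frames where the two executions differ. Each of the $B$ sampled indices is uniform on $\{1,\ldots,t\}$ and independent of the others. So a single draw avoids $W$ with probability $1-L/t$, and all $B$ draws avoid it with probability $(1-L/t)^B$, by independence. For the inequality $(1-L/t)^B\ge 1-BL/t$, I would use Bernoulli's inequality $(1+y)^B\ge 1+By$ with $y=-L/t\ge -1$ and integer $B\ge 0$. Alternatively, a union bound on the complementary event gives $\Pr[\exists\text{ draw in }W]\le BL/t$. The union bound needs no range condition, so I would present that route.

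\emph{Reduction to Lemma~\ref{lem:context-gap}(a).} The key step is a coupling. Use the same random indices, and the same internal randomness of the estimator, for both executions. On the event $E$ that no index falls in $W$, the sampled frames coincide, since the executions agree outside $W$. The instruction $x$ and the current frame are also shared. For $o_t$, I would note that if $t\in W$ the frames differ only in the sense $o_u\approx o_v$ of the setup, which counts as the same observation. Hence, on $E$, the estimator's output is identical in the two executions.

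\emph{Conclusion.} If the two executions sit at progress values differing by $\delta$ at time $t$, Lemma~\ref{lem:context-gap}(a) applies conditionally on $E$ and gives mean error at least $\delta/2$ on that pair. Unconditionally, this yields an error of at least $\frac{\delta}{2}(1-L/t)^B\ge\frac{\delta}{2}(1-BL/t)$.

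\emph{Main obstacle.} The main difficulty is being precise about what "the same input" means. The coupling must use shared randomness, and the current frame $o_t$ must either lie outside the window or be declared equivalent under $\approx$. I would state this convention explicitly rather than prove anything deeper; after that, the argument is immediate.
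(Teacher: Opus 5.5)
Your proposal is correct and follows the paper's own argument. Independence gives the miss probability $(1-L/t)^B$, and Bernoulli's inequality gives $1-BL/t$; your union-bound variant is equivalent here, since $L\le t$ holds either way. On the event that no sample hits the window, the two inputs coincide, so Lemma~\ref{lem:context-gap}(a) applies. Your explicit coupling of the sampled indices and internal randomness, and the resulting unconditional bound $\frac{\delta}{2}(1-L/t)^B$, spell out what the paper's one-line proof leaves implicit; the $\approx$ patch for $t\in W$ is unnecessary if, as the lemma states, the estimator's input is only the $B$ sampled frames.
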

\begin{proof}
Each sample misses the window with probability $1-L/t$; Bernoulli's inequality
gives the bound, and without such a frame both inputs are identical.
\end{proof}

\begin{lemma}[Accumulation without verification]
\label{lem:accumulation}
If $k_t$ advances whenever $\mathcal{P}$ proposes a completion, with no
verification, then (a) while $k_t>a_t$,
$\hat p_t-p_t\ge(1-p^{\mathrm{loc}}_t)/K$ until $e_{a_t}$ completes; (b) a
recorded false completion corrupts every later context that refers to it; and
(c) with independent premature advances of rate $\alpha$, an episode contains one
with probability $1-(1-\alpha)^K$.
\end{lemma}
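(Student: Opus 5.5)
The proof splits into the three claims, each handled directly from the step form of Equation~\ref{eq:gt-progress}.

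\textbf{Claim (a).} Suppose $k_t>a_t$, so $k_t\ge a_t+1$. Since $\hat p^{\mathrm{loc}}_t\ge 0$, we have
$\hat p_t-p_t=\big((k_t-a_t)+\hat p^{\mathrm{loc}}_t-p^{\mathrm{loc}}_t\big)/K\ge(1-p^{\mathrm{loc}}_t)/K$.
This holds as long as the inequality $k_t>a_t$ persists. The point to argue is that it does persist until $e_{a_t}$ actually completes. Without verification, $k_t$ never decreases: $\mathcal{N}$ only advances. Meanwhile $a_t$ increases only when a true completion occurs. So the gap stays open until the true step finishes, and possibly longer.

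\textbf{Claim (b).} This is a bookkeeping statement about $\mathcal{N}$. Once a false completion of $e_k$ is recorded, $\mathcal{N}$ carries the outcome ``$e_k$ done'' forward in its plan state. Every later context $c_{t'}$ is built from that state: by the State Recall and Sequence Tracking forms it either contains this outcome or presupposes the prefix that includes it. Hence every such context contains a false fact. Under the Section~\ref{app:assumptions} convention, the resulting errors are charged to $\eta^{\mathrm{PC}}$ (when the position is off) or to $\lambda$ (when $k_t=a_t$ but $c_t$ is wrong). I would state this as: the recorded set of completions is never revised, and every later context is a function of it.

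\textbf{Claim (c).} Each of the $K$ steps is an opportunity for a premature advance, occurring independently with probability at most $\alpha$. The probability of no premature advance is therefore at least $(1-\alpha)^K$, with equality at rate exactly $\alpha$, and taking the complement gives the stated probability. One subtlety: after a premature advance the held step changes, so the number of opportunities could differ from $K$. I would define the event as ``the first premature advance occurs among the $K$ held steps''. I would then use a union-free product over the independent per-step trials to avoid this issue.

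\textbf{Main obstacle.} The delicate part is (a): justifying that the gap persists. This needs both the monotonicity of $k_t$ and the fact that $a_t$ cannot overtake $k_t$ before completion. I would make explicit the convention that $\hat p^{\mathrm{loc}}_t\ge0$ and that $\mathcal{N}$ never regresses.
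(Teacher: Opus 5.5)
Your proposal is correct and follows the paper's proof: (a) is the same algebra from Equation~\ref{eq:gt-progress}, dropping the nonnegative term $\hat p^{\mathrm{loc}}_t/K$; (b) rests on later contexts being produced from the unrevised record; and (c) is the complement of no premature advance over the $K$ per-step trials. Your persistence argument in (a) ($k_t$ never decreases, and $a_t$ changes only on a true completion) and your first-premature-advance framing in (c) spell out what the paper leaves implicit, though in (b) you only need the later contexts that actually refer to the false outcome, not every later context.
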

\begin{proof}
(a) With $k_t\ge a_t+1$, Equation~\ref{eq:gt-progress} gives
$\hat p_t-p_t\ge(1+\hat p^{\mathrm{loc}}_t-p^{\mathrm{loc}}_t)/K$.
(b) Later contexts are produced from the record. (c) is the complement of no
premature advance in $K$ steps.
\end{proof}

\begin{lemma}[Verification]
\label{lem:verification}
In \texttt{ProgressCompass}, for a step $e_k$ of an episode whose plan matches:
(a) $\mathcal{N}$ passes $e_k$ early only if $\mathcal{P}$ proposes early and
$\mathcal{V}$ accepts, with probability $\alpha_k\beta_k\le\alpha\beta$ against
$\alpha_k$ without $\mathcal{V}$; (b) the record holds a false completion only in
that event; (c) a rejected true completion keeps the position one step behind
and the record unchanged; (d) after a premature advance the position realigns
when $e_k$ truly completes, unless another false accept occurs first.
\end{lemma}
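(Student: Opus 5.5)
The four claims of Lemma~\ref{lem:verification} follow from the control logic of $\mathcal{N}$ together with the conventions of Appendix~\ref{app:assumptions}, so the plan is an event-level argument rather than a calculation. Fix an episode whose plan from $\mathcal{O}$ matches the annotated steps $e_1,\ldots,e_K$, and fix a step $e_k$ that $\mathcal{N}$ currently holds. The operative rule of the loop is that $k_t$ changes only when $\mathcal{P}$ proposes completion of the held step and $\mathcal{V}$ accepts. In that same event $\mathcal{N}$ writes the outcome into the record, and in no other. I will state this rule explicitly first, since every part reduces to it.

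For (a), I define the event ``$e_k$ is passed early'' as $k_t$ advancing past $k$ at some time with $a_t\le k$. By the rule, this event is contained in the intersection of two events: $E_1=\{\mathcal{P}$ proposes completion early while $\mathcal{N}$ holds $e_k\}$ and $E_2=\{\mathcal{V}$ accepts that proposal$\}$. The chain rule gives $\Pr(E_1\cap E_2)=\Pr(E_1)\Pr(E_2\mid E_1)=\alpha_k\beta_k$, and this needs no independence between $\mathcal{P}$ and $\mathcal{V}$. Bounding each factor by its maximum gives $\le\alpha\beta$. Without $\mathcal{V}$, the event is just $E_1$, which has probability $\alpha_k$; this matches the setting of Lemma~\ref{lem:accumulation}. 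Part (b) is immediate from the same rule. The record is written only on an accept, and an accept of a proposal made while $a_t\le k$ is exactly the event in (a). An accept with $a_t>k$ records a true completion.

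For (c), I take a true completion ($a_t=k+1$) that $\mathcal{V}$ rejects, which happens at rate $\gamma$. In that case no advance and no write occur, so $k_t=k=a_t-1$ and the record is unchanged. The error this causes is therefore a one-step position lag, which is charged to $\eta^{\mathrm{PC}}$; it is not a corruption of the record.

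For (d), I argue by tracking $a_t$ after a premature advance, so $k_t=k+1$ while $a_t=k$. Since $a_t$ is nondecreasing, it reaches $k+1$ exactly when $e_k$ truly completes. From that time $k_t=a_t$, unless in the meantime $\mathcal{N}$ has advanced again. Such a further advance requires another early proposal on $e_{k+1}$ accepted by $\mathcal{V}$, that is, another false accept. Hence realignment holds except on that event.

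The main obstacle is definitional rather than technical. I need to pin down ``early'' and ``passes'' precisely enough that the containment in (a) is exact: the proposal must be made while $\mathcal{N}$ holds $e_k$, and $\mathcal{V}$ must judge that same proposal. I also need to check that the case of a mismatched plan is excluded by hypothesis, since there $k_t$ and $a_t$ index different step lists. I would settle this by defining early as ``proposed at a time $t$ with $a_t\le k_t$'', which is consistent with Appendix~\ref{app:assumptions}.
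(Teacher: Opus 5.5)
Your proposal is correct and follows the paper's proof. Both reduce (a)--(c) to the rule that $\mathcal{N}$ advances and writes the record only after $\mathcal{V}$ accepts a proposal from $\mathcal{P}$, use the chain rule (no independence) to get $\alpha_k\beta_k$, and obtain (d) by noting that with $k_t=k+1$ and $a_t=k$ the position realigns once $e_k$ completes; you simply make explicit the events and the meaning of ``early'' that the paper's two-line proof leaves implicit.
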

\begin{proof}
$\mathcal{N}$ advances and records an outcome only after $\mathcal{V}$ accepts,
which gives (a)--(c) by the chain rule. For (d), while $e_k$ is incomplete
$a_t=k$ and $k_t=k+1$, and $a_t$ becomes $k+1$ when $e_k$ completes.
\end{proof}

\subsection{Proof of Theorem~\ref{thm:main}}
\label{app:proofs}

\begin{restatethm}[Theorem~\ref{thm:main}, with its terms spelled out]
Fix a task with $K$ steps. (i) Every estimator $f$ whose output depends only on
$x$, $o_t$, and independent randomness satisfies
$\varepsilon^{f}\ge\delta_{\mathrm S}+\delta_{\mathrm Q}+\delta_{\mathrm R}$.
(ii) $\varepsilon^{\mathrm{PC}}\le(\eta^{\mathrm{PC}}+\lambda)/K$, where
$\eta^{\mathrm{PC}}=\mathbb{E}[d_t]$ with
$d_t=(|k_t-a_t|+1)\,\mathbf 1[k_t\ne a_t]$ if the plan matches and $d_t=K$
otherwise, and
$\lambda=\mathbb{E}[\,|\hat p^{\mathrm{loc}}_t-p^{\mathrm{loc}}_t|\mid\text{the plan matches and }k_t=a_t]$.
(iii) A step is passed early with probability at most $\alpha\beta$, against
$\alpha$ without $\mathcal{V}$, and a rejected true completion only delays the
position.
\end{restatethm}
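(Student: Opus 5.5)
The three parts are independent, and I will prove them in order, leaning on the lemmas of Appendix~\ref{app:lemmas}.

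\textbf{Part (i)} is Lemma~\ref{lem:context-gap}(b). The estimator's output given $o_t=o$ uses only $x$, $o$ and randomness independent of the execution, so it is conditionally independent of $p_t$. Its conditional error is therefore at least $m(o)$ from Definition~\ref{def:gaps}. Taking expectation over $o_t$ gives $\varepsilon^f\ge\mathbb{E}[m(o_t)]$. Splitting by the three labels, dropping unlabelled frames (which is allowed since $m\ge0) and using that the labels are disjoint gives $\delta_{\mathrm S}+\delta_{\mathrm Q}+\delta_{\mathrm R}$.

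\textbf{Part (ii)} is a pointwise bound $|\hat p_t-p_t|\le d_t/K$ on frames off the correct position, plus the local term on the others. I split into three cases.
\begin{itemize}
  \item \emph{Plan mismatched.} Both $p_t$ and $\hat p_t$ lie in $[0,1]$, so $|\hat p_t-p_t|\le1=K/K=d_t/K$.
  \item \emph{Plan matches and $k_t\neq a_t$.} By Equation~\ref{eq:gt-progress},
  \[
    |\hat p_t-p_t|\le\frac{|k_t-a_t|+|\hat p^{\mathrm{loc}}_t-p^{\mathrm{loc}}_t|}{K}\le\frac{|k_t-a_t|+1}{K}=\frac{d_t}{K}.
  \]
  This is where the ``$+1$'' in $d_t$ is used.
  \item \emph{Plan matches and $k_t=a_t$.} Here $|\hat p_t-p_t|=|\hat p^{\mathrm{loc}}_t-p^{\mathrm{loc}}_t|/K$.
\end{itemize}
Taking expectations, the first two cases contribute $\mathbb{E}[d_t]/K=\eta^{\mathrm{PC}}/K$, because $d_t=0$ on the third case. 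The third case contributes $\lambda\cdot\Pr[\text{match},k_t=a_t]/K\le\lambda/K$. Summing gives the bound. Notably, no $\delta$ term appears: every frame is charged to either position or local error, which is the ``crossed out'' floors of the boxed statement.

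\textbf{Part (iii)} is Lemma~\ref{lem:verification}(a),(c). $\mathcal{N}$ advances only on a $\mathcal{P}$ proposal followed by a $\mathcal{V}$ accept. By the chain rule, a premature pass has probability $\alpha_k\beta_k\le\alpha\beta$, against $\alpha_k\le\alpha$ when $\mathcal{V}$ is removed, as in Lemma~\ref{lem:accumulation}. A rejection leaves both $k_t$ and the record unchanged, so it only delays the position.

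The main obstacle is bookkeeping in (ii). I must make sure the conditional definition of $\lambda$ is multiplied by a probability at most one, and that the mismatch convention $d_t=K$ really dominates the error of $1$. Both checks are immediate once stated.
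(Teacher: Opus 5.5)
Your proposal is correct and follows the paper's proof essentially line for line. Parts (i) and (iii) are delegated to Lemma~\ref{lem:context-gap}(b) and Lemma~\ref{lem:verification}, and part (ii) is the same three-case pointwise bound $|\hat p_t-p_t|\le d_t/K+|\hat p^{\mathrm{loc}}_t-p^{\mathrm{loc}}_t|\,\mathbf 1[\text{the plan matches and }k_t=a_t]/K$ followed by taking expectations. Your explicit checks, that $\lambda$ enters multiplied by a probability at most one and that the mismatch case uses $p_t,\hat p_t\in[0,1]$, only spell out steps the paper leaves implicit.
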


\begin{proof}
(i) is Lemma~\ref{lem:context-gap}(b), and (iii) is
Lemma~\ref{lem:verification}. For (ii), on an episode whose plan matches,
Equation~\ref{eq:gt-progress} gives
$\hat p_t-p_t=\big((k_t-a_t)+(\hat p^{\mathrm{loc}}_t-p^{\mathrm{loc}}_t)\big)/K$.
If $k_t=a_t$ the error is $|\hat p^{\mathrm{loc}}_t-p^{\mathrm{loc}}_t|/K$; if
$k_t\ne a_t$ it is at most $(|k_t-a_t|+1)/K=d_t/K$; and if the plan does not
match it is at most $1=d_t/K$. Hence at every frame
$|\hat p_t-p_t|\le d_t/K+|\hat p^{\mathrm{loc}}_t-p^{\mathrm{loc}}_t|\,
\mathbf 1[\text{the plan matches and }k_t=a_t]/K$, and taking expectations gives
$\varepsilon^{\mathrm{PC}}\le(\eta^{\mathrm{PC}}+\lambda)/K$.
\end{proof}

\begin{remark}[Weights and scope]
\label{rem:weights}
\label{rem:scope}
With step weights $w_\ell$ summing to one,
$\hat p_t=\sum_{\ell<k_t}w_\ell+w_{k_t}\hat p^{\mathrm{loc}}_t$, and every $1/K$
in (ii) becomes $\max_\ell w_\ell$; we use $w_\ell=1/K$ throughout.
The first bound covers current-frame estimators and, with the probability of
Lemma~\ref{lem:history}, estimators that read a fixed budget of sampled frames.
It does not cover a model that reads the whole history; whether such models use
it is the empirical question of Section~\ref{sec:diagnosis}. The theorem gives a
condition under which \texttt{ProgressCompass} is more accurate, not a guarantee.
\end{remark}

\section{\texttt{ProgressCompass} Implementation}
\label{app:prompts}
\label{app:implementation}

\textbf{Models and decoding.}
The Orienter $\mathcal{O}$ is Qwen3.5-27B, and the Verifier $\mathcal{V}$ and the
Navigator $\mathcal{N}$ are Qwen3.5-9B. All three decode at temperature $0$ with
thinking disabled and return JSON under a fixed schema. The frozen PRM
$\mathcal{P}$ is RoboMeter-4B. The episode is sampled every $10$ frames, with at
most $128$ frames.

\textbf{Navigator.}
$\mathcal{N}$ never sees a frame. It reads the instruction and a text state that
holds the plan, whether each step is satisfied, the verified memory, the current
position, and the step in flight, and it makes exactly one call per turn by fixed
rules: while a step is in flight it asks $\mathcal{P}$ and $\mathcal{V}$ to
resolve it; when every step is satisfied it asks $\mathcal{O}$ once to review the
plan for an omitted step, and then ends; when the video ends it ends; otherwise it
asks $\mathcal{O}$ for the next step.

\textbf{Orienter.}
$\mathcal{O}$ sees the current frame and no later frame, together with the
instruction, the plan, and the verified memory. On its first call it lists the
visible task objects with their counts and builds the plan: an ordered list of
the outcomes the instruction requires, with one step per occurrence when an
action repeats and a visually checkable criterion for each step. On later calls it
may add, correct, remove, or reorder open steps, while satisfied steps stay fixed.
It then describes the next open step: a self-contained subtask sentence, the
current state before it, the expected transition $\tau_t$, the state after it,
and a hint on whether completion is a lasting state, a brief interaction, or an
observation. Every step must be required by the instruction; the frame grounds
its objects but does not add steps.

\textbf{PRM.}
$\mathcal{P}$ scores the frames since the current step began against the subtask
sentence. From its curve we take a completion candidate: a high-score frame, the
peak before a sustained drop, or the end of the video.

\textbf{Verifier.}
$\mathcal{V}$ sees three frames in order, at the start of the step, during it, and
at the candidate, together with the subtask sentence, $\tau_t$, the predicted
state after the step, and the verified memory. It first records what each frame
shows and the visible change, with counts when the step moves one of several
identical objects, and only then decides whether the step was newly carried out
and finished. The descriptions of $\mathcal{O}$ are references, and the frames
decide every disagreement; an outcome already present at the start does not
count, and uncertain evidence is a rejection. On acceptance, its description of
the candidate frame enters the verified memory and $\mathcal{N}$ advances the
position; on rejection, the step stays in flight, with at most eight
verifications per step.

\section{Inference Details of the Compared Models}
\label{app:experimental-details}
\label{app:baseline-inference}

We run every compared model with its own input format and inference procedure, and map its output to a $0$ to $100$ progress
scale. Without context, a model scores the whole episode under the episode
instruction; with context, it scores each annotated subtask clip under the
instruction of that subtask. The settings of each model are as follows.

\textbf{ProgressLM.}
We run ProgressLM~\citep{zhang2026progresslm} with \texttt{n\_demo=5}. Without context, we set \texttt{max\_frames=0} and sample RMBench at 7.5 FPS and RoboDojo and LIBERO-Mem at 2.5 FPS; with context, we sample RMBench at 7.5 FPS and RoboDojo and LIBERO-Mem at 3.0 FPS. We parse the value enclosed by \texttt{<score>}, clip it to $[0,1]$, and multiply it by 100.

\textbf{Robo-Dopamine.}
We run Robo-Dopamine~\citep{tan2026robodopamine}. The last frame of the input video serves as the required goal image. We use \texttt{frame\_interval=5} without context and \texttt{frame\_interval=10} with context, both with \texttt{batch\_size=10}. We run the released incremental, forward, and backward modes separately, apply the mode-specific official post-processing, and average the three resulting curves.

\textbf{RoboMeter.}
We run RoboMeter~\citep{liang2026robometer} in \texttt{frame\_steps} mode with \texttt{prefix\_sample\_frames=8}. Without context, RMBench is evaluated at every original frame and RoboDojo and LIBERO-Mem at 2.5 FPS; with context, all three sources are sampled at 3.0 FPS. The predicted reward at each sampled prefix is stored on a 0 to 100 progress scale.

\textbf{TOPReward.}
We run TOPReward~\citep{chen2026topreward} with Molmo2-4B as its backend~\citep{clark2026molmo2}, with \texttt{num\_samples=48} and \texttt{long\_side=0}, and \texttt{max\_frames=96} without context and \texttt{max\_frames=48} with context. Both use mean token-log-probability reduction, with video-description generation and the chat template disabled. For each video, the raw instruction rewards over sampled prefixes are min-max normalized and multiplied by 100.

\textbf{VLAC.}
We run VLAC~\citep{zhang2026vlac} in critic mode with \texttt{compress\_fps=5}, \texttt{batch\_num=5}, \texttt{pair\_skip=5}, \texttt{temperature=0.5}, \texttt{top\_k=1}, and \texttt{think=false}; with context, we additionally set \texttt{in\_context\_done=false} and \texttt{done\_threshold=0.9}. The released preprocessing resizes input images to $448\times448$. When sampling omits the original terminal frame, the runner appends that frame and its terminal prediction to the saved curve.

\end{document}